\documentclass{article} 
\usepackage{iclr2026_conference,times}


\usepackage{amsmath,amsfonts,bm}


















\def\1{\bm{1}}










\DeclareMathAlphabet{\mathsfit}{\encodingdefault}{\sfdefault}{m}{sl}
\SetMathAlphabet{\mathsfit}{bold}{\encodingdefault}{\sfdefault}{bx}{n}













\usepackage{hyperref}
\usepackage{url}
\usepackage{graphicx}
\usepackage{amsmath,amsthm,amssymb}
\usepackage{thmtools, thm-restate}
\usepackage{wrapfig}
\usepackage{algorithm}
\usepackage{algpseudocode}
\usepackage{dsfont}
\usepackage{enumitem}
\usepackage{booktabs}
\usepackage{multirow}
\usepackage{array}
\usepackage[table]{xcolor}
\usepackage{amsmath}      
\usepackage{amsthm}       
\usepackage{xcolor}       
\usepackage[most]{tcolorbox} 

\newcommand{\rev}[1]{#1}
\newcommand{\tea}[0]{p_{\text{E}}}
\newcommand{\stu}[0]{p_{\text{T}}}


\title{Trust-Region Adaptive Policy Optimization}


\author{Mingyu Su$^{1}$, Jian Guan$^{2}$, Yuxian Gu$^{1}$, Minlie Huang$^{1}$, Hongning Wang$^{1}\thanks{Corresponding Author}$ \\
$^1$The Conversational AI (CoAI) Group, Tsinghua University $^2$Ant Group\\
\texttt{sumy25@mails.tsinghua.edu.cn, hw-ai@tsinghua.edu.cn}
}

%
\def\D{\mathcal{D}}
\newcommand{\mySFT}{TrSFT}
\newcommand{\mySolution}{TRAPO}

\iclrfinalcopy 
\begin{document}

\maketitle

\begin{abstract}
Post-training methods, especially Supervised Fine-Tuning (SFT) and Reinforcement Learning (RL), play an important role in improving large language models' (LLMs) complex reasoning abilities. However, the dominant two-stage pipeline (SFT then RL) suffers from a key inconsistency: SFT enforces rigid imitation that suppresses exploration and induces forgetting, limiting RL's potential for improvements. 
We address this inefficiency with \mySolution{}  (\textbf{T}rust-\textbf{R}egion \textbf{A}daptive \textbf{P}olicy \textbf{O}ptimization), a hybrid framework that interleaves SFT and RL within each training instance by optimizing SFT loss on expert prefixes and RL loss on the model's own completions, unifying external supervision and self-exploration. To stabilize training, we introduce Trust-Region SFT (TrSFT), which minimizes forward KL divergence inside a trust region but attenuates optimization outside, effectively shifting toward reverse KL and yielding stable, mode-seeking updates favorable for RL. An adaptive prefix-selection mechanism further allocates expert guidance based on measured utility. Experiments on five mathematical reasoning benchmarks show that \mySolution\ consistently surpasses standard SFT, RL, and SFT-then-RL pipelines, as well as recent state-of-the-art approaches, establishing a strong new paradigm for reasoning-enhanced LLMs.
Our code and data are publicly available at \url{https://github.com/Su-my/TRAPO}.
\end{abstract}

\section{Introduction}


Driven largely by post-training, especially Supervised Fine-Tuning (SFT) and Reinforcement Learning (RL) techniques, large language models (LLMs) have demonstrated substantial advances in complex reasoning tasks, such as mathematical deduction~\citep{lightman2023let}, program synthesis~\citep{chen2021evaluatinglargelanguagemodels}, and multi-step decision-making~\citep{shridhar2021alfworld}, shifting them from surface-level chat-bots toward deep reasoners~\citep{openAI-o3,guo2025deepseek}. 
Particularly, 
current dominant LLM post-training pipeline integrates SFT and RL in a two-stage fashion, where SFT precedes RL. This design counts on the SFT stage to teach an LLM imitating carefully curated expert demonstrations, and then sharpens the model's reasoning skills via trial-and-error in the RL stage. 

However, a fundamental barrier exists that hinders the synergy between SFT and RL.
On the one hand, SFT tends to lock the trained models into imitative and rigid behavior modes~\citep{chu2025sft, chen2025sft}, which hinders effective exploration crucially needed in the RL stage. 
On the other hand, SFT is also prone to cause catastrophic forgetting in the trained models, impeding the RL stage from exploiting the pretraining knowledge for improvements.
These inconsistencies raise an important challenge: \textit{how can we effectively incorporate the knowledge-distillation benefits of SFT into RL training without undermining the model's exploratory capacity and pretrained knowledge?} 

In this work, we pinpoint the root cause of the aforementioned barrier to be the disjoint two-stage SFT and RL training, and propose a hybrid post-training framework as our solution.
Specifically, standard SFT training requires ``sufficient'' reduction of its training loss, but this is not sufficiently an indicative metric for later RL training. In other words, a lower SFT loss does not suggest a better starting point for RL training. Worse still, over-training in SFT might push the model out of the region suited for RL, while there is no immediate signal measuring that. 
Our idea is then to interleave SFT with RL in each training instance: only perform SFT on the prefix of the given expert trajectory, and continue RL training by completing the trajectory thereafter.
In this way, we can best integrate the utility of expert demonstrations and self-exploration at the finest grain.


\begin{figure}[t!]
\begin{center}
\includegraphics[width=\linewidth]{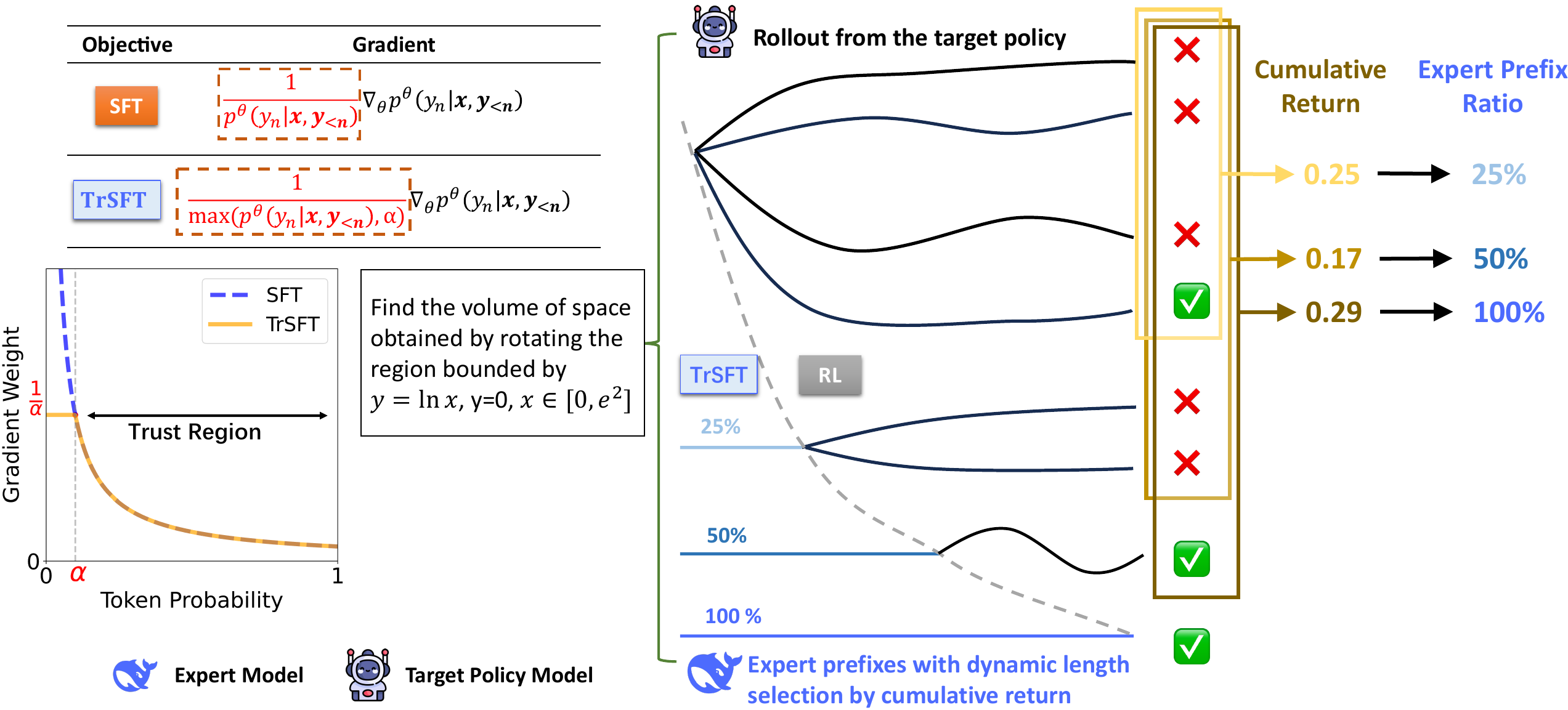}
\end{center}
\vspace{-1.0em}

\caption{
An overview of our \textbf{\mySolution} framework, which synergistically combines two key components: Trust-Region SFT (TrSFT) and Adaptive Expert Guidance.
\textbf{Left:} By clipping the gradient weight with a trust-region parameter $\alpha$, \mySFT\ prevents exploding gradients on low-probability tokens, ensuring a stable learning signal when combining with RL.
\textbf{Right:} The adaptive guidance mechanism implements a ``learn-while-practicing'' loop. When the target model fails a rollout, its cumulative return dynamically dictates the length of an expert prefix provided for guidance. The model then continues generation and the full trajectory is optimized using both the \mySFT\ loss on expert prefixes and a standard RL objective on the model rollout.}

\label{fig:overview}
\end{figure}

Though intuitive, several technical challenges prevent a straightforward combining scheme.
First, using the language of RL, the negative log-likelihood (NLL) objective in SFT aims to minimize the forward Kullback–Leibler (KL) divergence between the target and expert policies (i.e., behavior cloning~\citep{torabi2018behavioral}), which exhibits a strong mode-covering property by assigning relatively high probabilities even to regions where the expert policy has no support~\citep{minka2005divergence, malinin2019reverse}. 
This is especially toxic when one interleaves SFT with RL in a per-instance basis, as those inflated modes immediately cause degenerations in the target policy (e.g., repetition or erroneous decoding) and keeps RL away from effective exploration. 
In this work, to mitigate the negative impact of SFT on RL, we propose a new form of SFT, named Trust-Region SFT (\mySFT), which minimizes forward KL divergence inside a trust region, while outside the region it down-weights the optimization strength and effectively shifts the objective toward reverse KL.
As reverse KL minimization is characterized by mode-seeking behaviors~\citep{gu2023minillm}, TrSFT avoids blindly balancing the modes in the entire space, and only seeks the most prominent mode of the expert policy when the expert policy is clearly distinct from the current target policy. This provides a stable and robust policy for the following RL training.



Second, determining the optimal amount of expert guidance for each training instance is nontrivial. A ``one-size-fits-all'' prefix length is inherently inefficient: it stifles valuable exploration on problems the model could solve independently, while offering insufficient support on challenging problems where more guidance is crucial. To address this, we introduce an adaptive prefix-selection mechanism that dynamically allocates expert guidance based on its utility. The core idea is a scaffolding approach: the model first attempts a problem without any guidance. If it struggles, indicated by a low return, we incrementally provide longer expert prefixes in subsequent rollouts for that same problem. This strategy ensures that expert guidance is used judiciously only as much as needed, thereby creating a dynamic balance between self-exploration and expert guidance.

Building on these designs, we propose \textbf{\mySolution} (\textbf{T}rust-\textbf{R}egion \textbf{A}daptive \textbf{P}olicy \textbf{O}ptimization), which unifies SFT and RL in a principled way. Extensive experiments on five mathematical reasoning benchmarks demonstrate the substantial benefits of our approach. \mySolution\ delivers significant gains over conventional post-training algorithms. Specifically, it outperforms standalone SFT and pure RL by +6.3 and +6.2 points, respectively. More critically, \mySolution\ surpasses the highly competitive SFT-then-RL baseline by +2.3 points, proving the superiority of our integrated framework over the disjoint two-stage approach. Crucially, we show that a naive, direct combination of SFT and RL objectives is counterproductive, causing a catastrophic performance collapse of over 18 points compared to the pure RL baseline. This result validates our initial hypothesis and highlights the necessity of our TrSFT objective for achieving a stable and synergistic integration. Furthermore, our pass@k analysis reveals that \mySolution, unlike pure RL methods, effectively expands the model's underlying solution space, leading to stronger test-time scaling properties. These compelling results establish \mySolution\ as a new, effective paradigm for developing reasoning-enhanced LLMs.


In summary, this work makes several pivotal contributions:

I. We introduce \mySolution, 
a novel post-training framework that combines SFT and RL at the instance level. It features TrSFT for stable knowledge internalization and a dynamic guidance-selection mechanism to balance guidance and exploration.


II. We identify the mode-covering property of SFT (Forward KL) as a source of instability and theoretically show that TrSFT shifts the optimization objective from SFT's mode-covering towards Reverse KL's mode-seeking behavior, ensuring stable updates for RL.

III. We conduct extensive experiments on five mathematical reasoning benchmarks, demonstrating that \mySolution\ outperforms traditional SFT, RL, and SFT-then-RL pipelines, as well as recent state-of-the-art approaches that combine SFT and RL.



\section{Methodology}
\label{methodology}


Our proposed framework, \mySolution{}, guides RL by leveraging prefixes from offline expert trajectories (Figure~\ref{fig:prefix-effect}). These prefixes serve a dual purpose: they act as in-context demonstrations to guide exploration and as direct supervision signals for skill internalization. The successful implementation of \mySolution\ hinges on resolving two key challenges: \textbf{(1) Guidance internalization:} how to effectively learn from the prefixes, and \textbf{(2) Guidance selection:} how to choose the optimal prefix length for each prompt. We will now provide an overview of the framework before detailing each component.

\subsection{Overview of \mySolution}
\begin{wrapfigure}{r}{0.4\linewidth}
    \vspace{-5.2em}
    \centering
    \includegraphics[width=\linewidth]{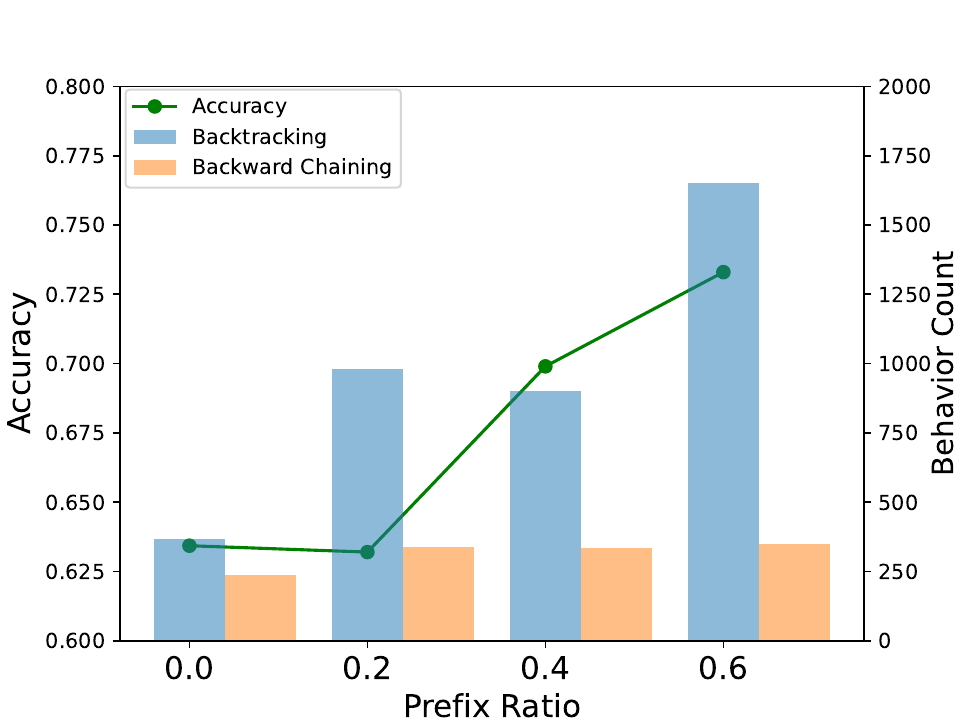}
    \vspace{-1.5em}
    \caption{Accuracy and characteristic of Qwen2.5-3B-Instruct reasoning on MATH-500 with different amount of tokens from DeepSeek-R1 as prefixes.}
    \label{fig:prefix-effect}
    \vspace{-10pt}
\end{wrapfigure}

The design of \mySolution{} is to realize a synergistic ``learn-while-practicing'' paradigm for LLM post-training, which couples learning from offline expert trajectories with online RL updates. 
The core workflow unfolds as follows: for each prompt, (1) a selected prefix from an offline expert trajectory is provided as a starting context; (2) the target policy rolls out from there to complete the reasoning; and (3) a dual update is then performed, where the generated completion is used for standard RL updates, while the expert prefix is used for direct policy optimization to internalize the expert's reasoning skills. 
\mySolution{} is expected to guide the target policy to find high return completions via the selected expert prefixes~\citep{liu2025uft, huang2025blending} and pruning unproductive solution space via skill internalization from the expert. 
As shown in Figure~\ref{fig:prefix-effect}, we empirically validate the benefit of introducing prefixes from expert trajectories in target policy rollouts. On the MATH-500 benchmark, we provided Qwen2.5-3B-Instruct with prefixes from DeepSeek-R1, and then calculated the response accuracy and counted the frequency of two key reasoning behaviors, i.e., backtracking and backward chaining~\citep{gandhi2025cognitive}, in the completed suffixes. Detailed experimental procedures are provided in Appendix~\ref{appendix:pre-exp}. It is clearly observed that longer expert prefixes steadily improve accuracy and stimulate the emergence of advanced reasoning behaviors.


However, integrating RL with expert demonstration presents two significant challenges:  \textbf{(1) Effective learning objective for guidance internalization.} The aim of incorporating expert prefixes is to not only immediately guide the target policy to find high return trajectories, but also enable the target policy to distill the expert's core problem-solving skills, highlighting the need for an effective joint learning algorithm. \textbf{(2) Efficient and adaptive guidance selection.} When using expert prefixes to guide exploration, a ``one-size-fits-all'' strategy, which applies a uniform prefix selection and a fixed number of rollouts to all prompts~\citep{liu2025uft,huang2025blending}, can hardly be optimal. This simple method would over-guide in easy prompts, thus stifling valuable exploration, while under-guiding in difficult ones, leading to failed rollouts. It remains challenging to design an adaptive, prompt specific mechanism that allocates only the minimal guidance necessary to ensure the target policy can effectively discover the correct solution paths.

\subsection{Trust-region SFT }

Regarding the first challenge of internalizing an expert's reasoning skills, a straightforward approach is to combine the standard SFT loss on expert prefixes with an RL objective on model-generated rollouts. However, this naive combination leads to severe performance degradation in our experiments (see \S\ref{sec:ablation}). To uncover the root cause, we first conduct a pilot study of SFT's training dynamics. The findings motivate our proposed Trust-Region SFT (\mySFT), an objective designed to resolve this instability, whose theoretical properties we then analyze.

\paragraph{Training Dynamics of SFT}

We begin by examining the standard SFT training objective. Given an expert policy $\tea$ and target policy $\stu^{\theta}$ parameterized by $\theta$, SFT forces $\stu^{\theta}$ to mimic $\tea$ by minimizing the NLL in every trajectory $\boldsymbol{y}=(y_1, \cdots,y_n)$ from the expert policy in a given prompt set $X$,
\begin{equation}
    L_{\text{SFT}}(\theta)=\mathbb{E}_{\boldsymbol{x}\sim X}\left[\mathbb{E}_{\boldsymbol{y}\sim \tea(\cdot|\boldsymbol{x})}\left[- \log\stu^\theta(\boldsymbol{y}|\boldsymbol{x})\right]\right].\label{eq:SFT-objective}
\end{equation}

\begin{figure}[t!]
\begin{center}
\includegraphics[width=\linewidth]{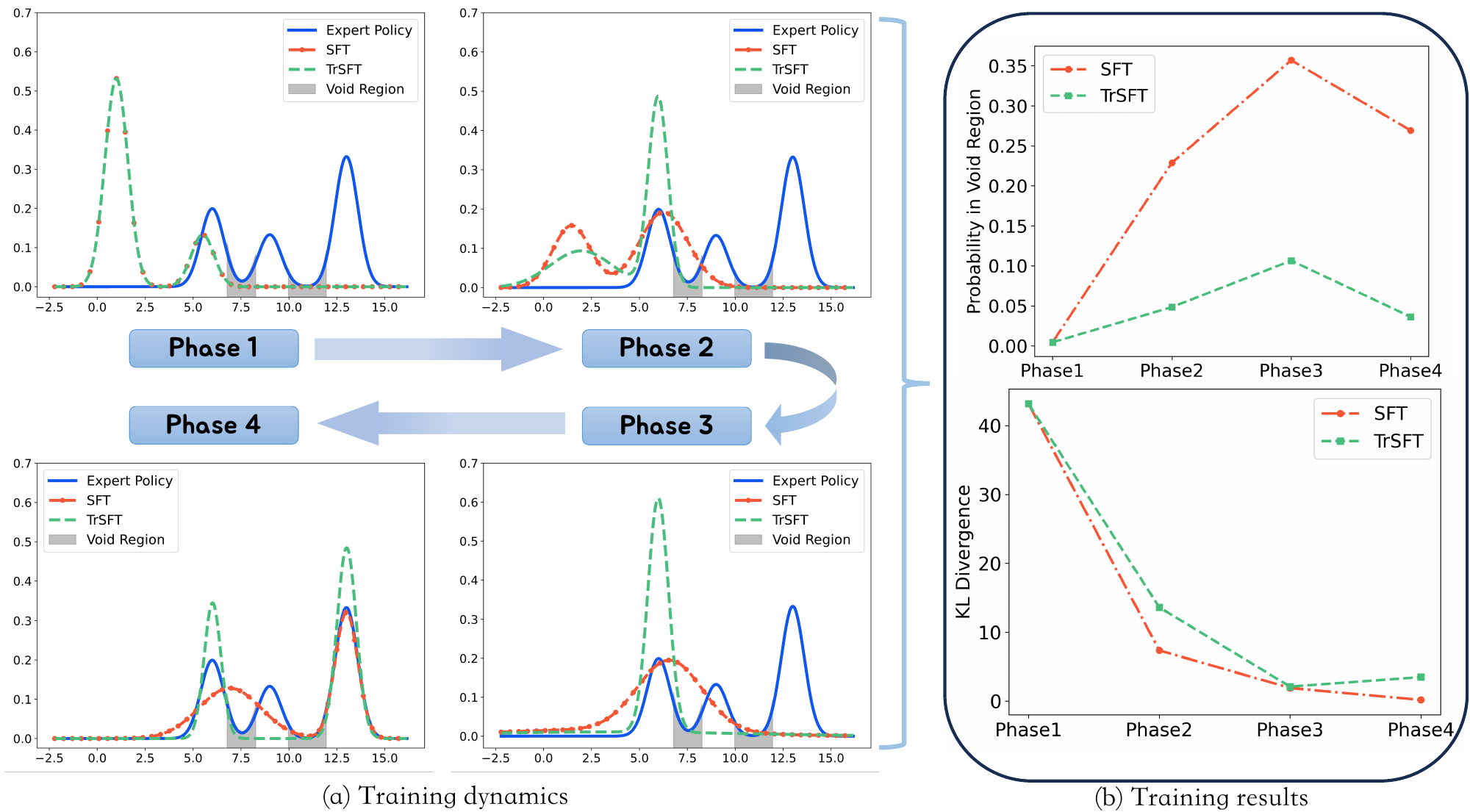}
\end{center}
\vspace{-1.5em}
\caption{An illustrative experiment showing the training dynamics during SFT.
Panel (a) shows snapshots of learnt target policy at four consecutive training phases, corresponding to training steps of 0, 50, 100, and 1000, respectively. Panel (b) presents the KL divergence curve along with the change in cumulative probability of the target policy within the void regions.
}
\label{fig:toy-experiment}
\end{figure}


The SFT objective is equivalent to minimizing the cumulative token-level forward KL divergence (see Appendix \ref{appendix:sft-kl-eq} for derivation). To understand its training dynamics, we conduct an illustrative experiment training a two-mode Gaussian Mixture Model (GMM) to mimic a three-mode expert GMM. As shown in Figure~\ref{fig:toy-experiment}, this process reveals the distribution-blending phenomenon \citep{minka2005divergence, malinin2019reverse}: the target policy assigns probability to void regions unsupported by either policy (e.g., shaded regions in Figure~\ref{fig:toy-experiment} (a)). This effect is detrimental to RL, as these regions lead to degenerated outputs (e.g., repetitions) that hinder effective exploration.

We point out that the underlying cause of this phenomenon lies in the gradient of standard SFT loss:
\begin{equation}
    \nabla_{\theta} L_{\text{SFT}} = -\frac{1}{N} \sum_{i=1}^{N} \sum_{n=1}^{|\boldsymbol{y^i}|} \textcolor{red}{\frac{1}{\stu^{\theta}(y_n^i|\boldsymbol{x^i},\boldsymbol{y_{<n}^i})}}\nabla_{\theta}\stu^{\theta}(y_n^i|\boldsymbol{x^i},\boldsymbol{y_{<n}^i}). \label{eq:SFT-gradient}
\end{equation}

The term $\frac{1}{\stu^{\theta}(y_n^i|\boldsymbol{x^i},\boldsymbol{y_{<n}^i})}$ in Eq \eqref{eq:SFT-gradient} can be considered as the weight of training token $y_{n}^i$ given 
context $(\boldsymbol{x^i},\boldsymbol{y_{<n}^i})$. When $y_{n}^i$ is drawn from a mode of expert policy which is far away from the current policy's mode, e.g., the rightmost mode of expert policy in Figure \ref{fig:toy-experiment}, this weight term will be extremely large. Consequently, the resulting $\nabla_{\theta} L_{\text{SFT}}$ is more detrimental than helpful for $\stu^\theta$ update, though it points to the correct the direction of policy improvement: the inflated gradient tends to first push $\stu^\theta$ into the voided regions during the online gradient update of SFT, as shown in Figure \ref{fig:toy-experiment}. 


While sufficient training can eventually correct for distribution-blending in standard SFT (e.g., phase 4 in Figure \ref{fig:toy-experiment}), this issue is critical in \mySolution{}. Here, the interleaving of SFT and RL means that any probability mass allocated to void regions will immediately cause degenerated rollouts. This can significantly hinder model learning, making modifications to the standard SFT objective essential.

\paragraph{Clipping the gradient of standard SFT loss}
To mitigate the negative impact of distribution-blending during online model update, we propose Trust-Region SFT (TrSFT), a mechanism designed to adaptively leverage the SFT updates. 
The core idea is to establish a region where the gradient from standard SFT loss can be trusted, while intervention is needed outside this region to prevent model collapsing into undesirable parts of the solution space:
\begin{equation}
    \nabla_{\theta} L_{\text{\mySFT}}^\alpha = -\frac{1}{N} \sum_{i=1}^{N} \sum_{n=1}^{|\boldsymbol{y^i}|} \textcolor{red}{\frac{1}{\max\Big(\stu^{\theta}(y_n^i|\boldsymbol{x^i},\boldsymbol{y_{<n}^i}), \alpha\Big)}}\nabla_{\theta}\stu^{\theta}(y_n^i|\boldsymbol{x^i},\boldsymbol{y_{<n}^i}), \label{eq:TrSFT-gradient}
\end{equation}
where $\alpha\in[0,1]$ is a hyperparameter defining the boundary of trust region. 
The optimization objective exhibits several desirable properties:



\textbf{(1) Safe knowledge instillation within the trust region.} Standard SFT risks distribution blending by applying large updates towards ``distant'' modes of the expert policy. \mySFT\ mitigates this by defining a dynamic trust region based on the target policy's own beliefs (i.e., whether $\stu^\theta(y_n^i|\boldsymbol{x^i},\boldsymbol{y_{<n}^i}) \ge \alpha$). Within this region, it employs the standard SFT objective
to aggressively mimic expert policy's behavior. Outside the region, the constant weight $\frac{1}{\alpha}$ significantly dampens the gradients, thereby reducing the disruptive impact of large gradient updates on the target policy's immediate behaviors.

The benefit of \mySFT\ can be easily understood by our illustrative experiment in Figure~\ref{fig:toy-experiment}. 
As illustrated in Phases 2 and 3, TrSFT prioritizes learning from high-overlapping regions---modes jointly favored by $\tea$ and $\stu^{\theta}$. This helps the target policy first instills expert's wisdom that is ``close'' and readily tangible, while preserving its existing strengths. This conservative, trust-region-based approach is safer and more stable than aggressive, full-distribution matching, especially when the target policy's capacity is limited.

\textbf{(2) Leading to a reasonable optimization endpoint.} 
We present the optimization problem defined by the gradient defined in Eq.~\eqref{eq:TrSFT-gradient}, and theoretically derive the solution.
\begin{restatable}{prop}{Firstprop}
Let $S(\lambda) = \{\, c \mid \tea(c) > \alpha \lambda, c\in\mathcal{C} \}$ and $\mathcal{C}$ is the vocabulary. There exists a \emph{unique} $\lambda \in \big(0, 1\big)$ such that $\lambda = \sum_{c \in S(\lambda)} \tea(c)\;$. And for this $\lambda$, the optimal solution of the optimization problem defined by the gradient defined in Eq.~\eqref{eq:TrSFT-gradient} is given by
\[
\stu^*({c}) =
\begin{cases}
\dfrac{\tea(c)}{\lambda}, & \text{if } \tea(c) > \alpha \lambda. \\[8pt]
0, & \text{otherwise }
\end{cases}
\] \label{prop:solution}
\end{restatable}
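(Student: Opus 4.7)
The plan is to recognize Eq.~(\ref{eq:TrSFT-gradient}) as the gradient of a convex surrogate loss on the probability simplex, solve the induced constrained optimization via KKT, and then verify that the Lagrange multiplier satisfying normalization is exactly the fixed point claimed in the proposition. First, I would antidifferentiate the per-token gradient: at a fixed context, the coefficient of $\nabla_\theta\stu^\theta(c)$ is $-\tea(c)/\max(\stu(c),\alpha)$, so the induced per-context objective (up to a constant) is $L^\alpha(\stu)=-\sum_{c}\tea(c)\,\phi_\alpha(\stu(c))$ with $\phi_\alpha(q)=\int_0^{q} ds/\max(s,\alpha)$, equal to $q/\alpha$ on $[0,\alpha]$ and $1+\log(q/\alpha)$ on $[\alpha,1]$. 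Since $\phi_\alpha$ is concave, $L^\alpha$ is convex in $\stu$, and its minimum on the simplex is characterized by KKT conditions and is essentially unique.

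Next, I would introduce multipliers $\lambda$ for $\sum_{c}\stu(c)=1$ and $\nu_c\ge 0$ for $\stu(c)\ge 0$ (with $\nu_c\stu(c)=0$), and write stationarity as $\tea(c)/\max(\stu^*(c),\alpha)=\lambda-\nu_c$. A short case analysis then yields the claimed form. If $\tea(c)>\alpha\lambda$, then assuming $\stu^*(c)=0$ forces $\nu_c=\lambda-\tea(c)/\alpha<0$, a contradiction; hence $\stu^*(c)>0$, $\nu_c=0$, $\max(\stu^*(c),\alpha)=\tea(c)/\lambda>\alpha$, giving $\stu^*(c)=\tea(c)/\lambda$. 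If $\tea(c)<\alpha\lambda$, then $\stu^*(c)>0$ would force $\max(\stu^*(c),\alpha)=\tea(c)/\lambda<\alpha$, impossible since $\max\ge\alpha$; hence $\stu^*(c)=0$. The boundary $\tea(c)=\alpha\lambda$ lies in the linear region of $\phi_\alpha$ and is non-generic; I would dispose of it by assuming genericity or by a short limiting argument.

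Finally, plugging the piecewise solution into $\sum_{c}\stu^*(c)=1$ produces the stated fixed-point equation $\lambda=F(\lambda):=\sum_{c\in S(\lambda)}\tea(c)$. For existence and uniqueness, consider $G(\lambda)=F(\lambda)-\lambda$: sorting the atoms as $p_1\ge p_2\ge\cdots$, the map $F$ is a non-increasing, right-continuous step function with $F(0^+)=1$ and $F(\lambda)=0$ for $\lambda>p_1/\alpha$, while $-\lambda$ is strictly decreasing. Thus $G$ is strictly decreasing on each constancy interval of $F$, equals $1$ at $0^+$, and tends to $-\infty$, so it has a unique sign change at some $\lambda^*\in(0,1)$; the upper bound $\lambda^*<1$ comes from the fact that $\alpha>0$ forces some $c$ to be excluded from $S(\lambda^*)$, so $F(\lambda^*)<1$. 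The hardest part is this step-function subtlety: a crossing of $G$ at a jump of $F$ corresponds precisely to the degenerate tie $\tea(c)=\alpha\lambda$ for some $c$, and one must either appeal to genericity or extend the solution to include a single boundary token whose $\stu^*$-mass lies in the flat region of $\phi_\alpha$ and absorbs the residual normalization; on the complement of this measure-zero set of expert distributions, the proposition holds verbatim.
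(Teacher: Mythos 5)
Your proof is correct and follows essentially the same route as the paper: reconstruct the convex per-token surrogate from the clipped gradient, apply KKT with a normalization multiplier $\lambda$ and nonnegativity multipliers, case-split (you organize by the sign of $\tea(c)-\alpha\lambda$; the paper by the value of $\stu(c)$, but the two decompositions are equivalent), and pin down $\lambda$ from $\sum_c\stu^*(c)=1$, deferring the tie case $\tea(c)=\alpha\lambda$ to a genericity argument exactly as the paper does. The one place you are more careful than the paper is the existence/uniqueness step: the paper appeals to the intermediate value theorem even though $\lambda\mapsto\sum_{c\in S(\lambda)}\tea(c)/\lambda$ has downward jumps, whereas your analysis of the monotone step function $G(\lambda)=F(\lambda)-\lambda$ correctly observes that a jump of $G$ across zero is precisely the degenerate tie case --- a more honest version of the same conclusion, and worth keeping explicit.
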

The optimal solution for \mySFT\ counters distribution-blending by pruning low-probability regions in expert policy ($\stu^*(c) = 0$) and rescaling primary modes ($\stu^*(c) = \tea(c) / \lambda$). This dual action effectively transforms the objective from mode-covering of Forward KL to mode-seeking akin to reverse KL, forcing the policy to focus on the expert's core skills and thereby facilitating high-return rollouts for RL. Appendix~\ref{appendix:solution} shows more details.

\subsection{Micro-group sampling}
To elegantly address the second challenge of guidance selection, we propose \emph{micro-group sampling}, which adaptively allocates guidance from expert prefixes based on the observed returns from the current policy rollouts, thereby both minimizing unnecessary reliance on expert prefixes and accommodating the heterogeneity of prompt difficulty within each training batch.

As illustrated in Figure \ref{fig:overview}, in each training prompt, \mySolution\ creates $N$ micro-groups in order, where each micro-group $g_i$ (for $i=1, \dots, N$) is specified by three key hyper-parameters: the prefix length ratio $L_i$, the return threshold $t_i$, and the sampling budget $n_i$. 
For micro-group $g_i$, \mySolution{} first computes the average return from all samples generated in the preceding micro-groups. If the average return is smaller than the threshold $t_i$, \mySolution{} provides the current target policy with a prefix whose length is set by the ratio $L_i$ of the complete expert trajectory, and then samples $n_i$ completions from the target policy. 
Otherwise, no expert prefix is provided, and $n_i$ policy rollouts are obtained directly from the target policy.

We set $0=L_1 < L_2 < \cdots < L_N=1$. 
$L_1 = 0$ is to ensure that in each training prompt, \mySolution{} always starts from guidance-free self-exploration RL, and $L_N = 1$ allows the target policy, when necessary, to access the complete reasoning path from the expert. 
As a result, an increasing level of $L_i$ ensures richer guidance is provided only when shorter prefixes prove insufficient. 
For clarity, Appendix~\ref{appendix:pseudocode} details the complete training procedure specified by \mySolution{} in Algorithm~\ref{alg:tag}.

\section{experiments}
\subsection{Experimental Setup}
\label{sec:setup}
\paragraph{Training Details}
Our primary training dataset is OpenR1-Math-46k-8192~\citep{yan2025learning}, which consists of a large collection of verified reasoning trajectories generated by DeepSeek-R1 for complex mathematical problems. To enhance the diversity of guidance, we additionally pair each problem with another trajectory sampled from OpenR1-Math-200k~\citep{openr1}. 
Following recent work~\citep{yan2025learning, huang2025blending, fu2025srft}, we use Qwen2.5-Math-7B~\citep{yang2024qwen2} as the base model.  To further validate the generality of our approach, we additionally evaluate it on Qwen2.5-7B-Instruct~\citep{qwen2.5}, a general-purpose model. 

\paragraph{Implementation Details}

We adopt the Group Relative Policy Optimization (GRPO)~\citep{shao2024deepseekmath,liu2025understanding} algorithm without the KL penalty \citep{hu2025open} for RL. 
The training is configured with a batch size of 128 and a constant learning rate of $5 \times 10^{-6}$. Our adaptive guidance mechanism operates on a total group size of 8 following prior work, which is partition into four micro-groups of sizes \{4, 2, 1, 1\}. These micro-groups correspond to relative expert prefix-length proportions of $(L_1, \dots, L_4) = (0, 0.2, 0.5, 1.0)$ and are activated by reward thresholds of $(t_1, \dots, t_4) = (-1, 0.5, 0.7, 0.9)$, respectively. The threshold $t_1=-1$ ensures the first micro-group is always without guidance. For our TrSFT objective, the trust-region parameter $\alpha$ in Eq.~\eqref{eq:TrSFT-gradient} is set to 0.1.
More details can be found in Appendix~\ref{appendix:main_exp}.


\paragraph{Evaluation benchmark and metrics}
We focus on mathematical reasoning tasks, while also evaluating various methods on both mathematical and general-domain benchmarks. Specifically, the mathematical benchmarks include AIME2024~\citep{li2024numinamath}, AMC~\citep{DBLP:conf/acl/HeLBHTSHHHZLQL024}, Minerva~\citep{DBLP:conf/nips/LewkowyczADDMRS22}, OlympiadBench~\citep{DBLP:conf/acl/HeLBHTSHHHZLQL024}, and MATH-500~\citep{DBLP:conf/nips/HendrycksBKABTS21}. Given the relatively small number of test samples in AIME2024 and AMC, we report avg@32 on these benchmarks, while employing pass@1 for the remaining three. For general-domain reasoning benchmarks, we report pass@1 on ARC-c~\citep{DBLP:journals/corr/abs-1803-05457}
and MMLU-Pro~\citep{DBLP:conf/nips/WangMZNCGRAHJLK24} to examine whether the improvements in reasoning ability generalize to other reasoning tasks.

\paragraph{Baselines}
We consider two categories of baseline methods:
\begin{itemize}[leftmargin=10pt]
    \item \textbf{Pure RL without External Expert Guidance.} This category includes GRPO~\citep{shao2024deepseekmath}, PRIME-Zero~\citep{cui2025process}, SimpleRL-Zero~\citep{zeng2025simplerl}, OpenReasoner-Zero~\citep{hu2025open}, and Oat-Zero~\citep{liu2025understanding}.
    \item \textbf{RL Incorporating External Expert Guidance.} This category includes (1) SFT: directly training the model to imitate expert trajectories, (2) SFT-then-RL: following the standard two-stage pipeline where SFT precedes RL, 
    (3) LUFFY~\citep{yan2025learning}: augmenting each group of eight rollouts with one expert trajectory to perform offline RL, and (4) \mbox{ReLIFT}~\citep{ma2025learning}: alternating between SFT and RL training across different batches, thereby dynamically switching the optimization objective.

\end{itemize}

\begin{table*}[t]
\centering
\caption{
Main experiment results on mathematical and general reasoning benchmarks based on \textbf{Qwen2.5-Math-7B}. 
\textbf{Bold} and \underline{underline} indicate the best and second-best results, respectively. $^*$~means the results are taken from the corresponding paper.}
\label{tab:main}
\setlength{\tabcolsep}{2.5pt}  
\renewcommand{\arraystretch}{1.3} 
\begin{center}    
\resizebox{\textwidth}{!}{%
\begin{tabular}{lccccc>{\columncolor{yellow!20}}c|cc>{\columncolor{cyan!20}}c}
\toprule
\multirow{2}{*}{\textbf{Model}} & \multicolumn{6}{c}{\textbf{Mathematical Reasoning}} & \multicolumn{3}{c}{\textbf{General Domain Reasoning}} \\
\cmidrule(lr){2-7} \cmidrule(lr){8-10}
 & \textbf{AIME2024} & \textbf{AMC} & \textbf{MATH-500} & \textbf{Minerva} & \textbf{Olympiad} & \textbf{Avg.} & \textbf{ARC-c} & \textbf{MMLU-Pro} & \textbf{Avg.} \\
\midrule
Qwen2.5-Math-7B & 11.9 & 33.7 & 47.0 & 12.5 & 21.9 & 25.4 & 34.5 & 23.3  & 28.9 \\
Qwen2.5-Math-7B-Instruct      
  & 11.3 & 48.2 & 82.6 & 36.8 & 39.7 & 43.7 & 72.4  & 38.3  &  55.4 \\
\midrule
\multicolumn{10}{c}{Pure RL without External Expert Guidance} \\
\midrule
GRPO & 24.0 & 59.0 & 84.0 & \underline{39.3} & 45.8 & 50.4 & \underline{80.5} &  47.2 & 63.9 \\
PRIME-Zero$^*$ & 17.0 & 54.0 & 81.4 & 39.0 & 40.3 & 46.3 & 73.3 &  32.7 & 53.0 \\
SimpleRL-Zero$^*$                
  & 27.0 & 54.9 & 76.0 & 25.0 & 34.7 & 43.5 & 30.2 &   34.5  &  32.4 \\
OpenReasoner-Zero$^*$  & 16.5 & 52.1 & 82.4 & 33.1 & 47.1 & 46.2 &  66.2 &  \textbf{58.7} & 62.5 \\
Oat-Zero$^*$                       
  & \underline{33.4} & 61.2 & 78.0 & 34.6 & 43.4 & 50.1 & 70.1 &   41.7 & 55.9 \\
\midrule
\multicolumn{10}{c}{RL Incorporating External Expert Guidance} \\
\midrule
SFT  & 27.7  & 56.0 & 84.8 & 38.2 & 44.7 & 50.3 &  51.5 &  33.1 &  42.3  \\
SFT-then-RL  & \textbf{33.5} & 62.3 & 86.6 & 41.2 & 47.6 & 54.3 & 52.9 & 36.1 & 44.5  \\
ReLIFT & 28.2 & 64.9 & 87.4 & 33.8 & 52.5 & 53.4 & 76.2 & 52.5 & 64.4 \\
LUFFY  & 29.4 & \underline{65.5} & \underline{88.4} & 38.2 & \underline{56.0} & \underline{55.5} & \underline{80.5} & \underline{53.0} & \underline{66.7} \\
\midrule
\multicolumn{10}{c}{\textbf{Our Method}} \\
\midrule
\mySolution  &  28.3 &  \textbf{66.2} & \textbf{89.2} & \textbf{41.5} & \textbf{57.6} &  \textbf{56.6} & \textbf{83.7} & 52.8 &  \textbf{68.3} \\
\bottomrule
\vspace{-20pt}
\end{tabular}%
}
\end{center}

\end{table*}

\subsection{Main Results}
\label{sec:main-results}
\paragraph{Mathematical reasoning performance}
As shown in Table~\ref{tab:main}, \mySolution\ achieves an average score of \textbf{56.6} across five mathematical reasoning benchmarks, outperforming all baselines. In particular, \mySolution\ yields improvements of \textbf{+6.3} and \textbf{+6.2} over \textit{SFT} and \textit{GRPO}, respectively, and a \textbf{+2.3} gain over the \textit{SFT-then-RL} baseline. These results validate our core hypothesis: \mySolution\ effectively enables the model to both internalize expert skills and leverage guidance for superior exploration, leading to a more robust acquisition of reasoning abilities.


\paragraph{General-domain reasoning performance}
On the two general reasoning benchmarks, \mySolution\ attains an average score of \textbf{68.3}, surpassing all baselines. By contrast, \textit{SFT} and \textit{SFT-then-RL} exhibit notably lower scores on these benchmarks, indicating that \mySolution, while leveraging external guidance, does not confine the model to rigid reasoning patterns; instead, it yields stronger generalization.

\begin{figure}[t!]
\begin{center}
\includegraphics[width=\linewidth]{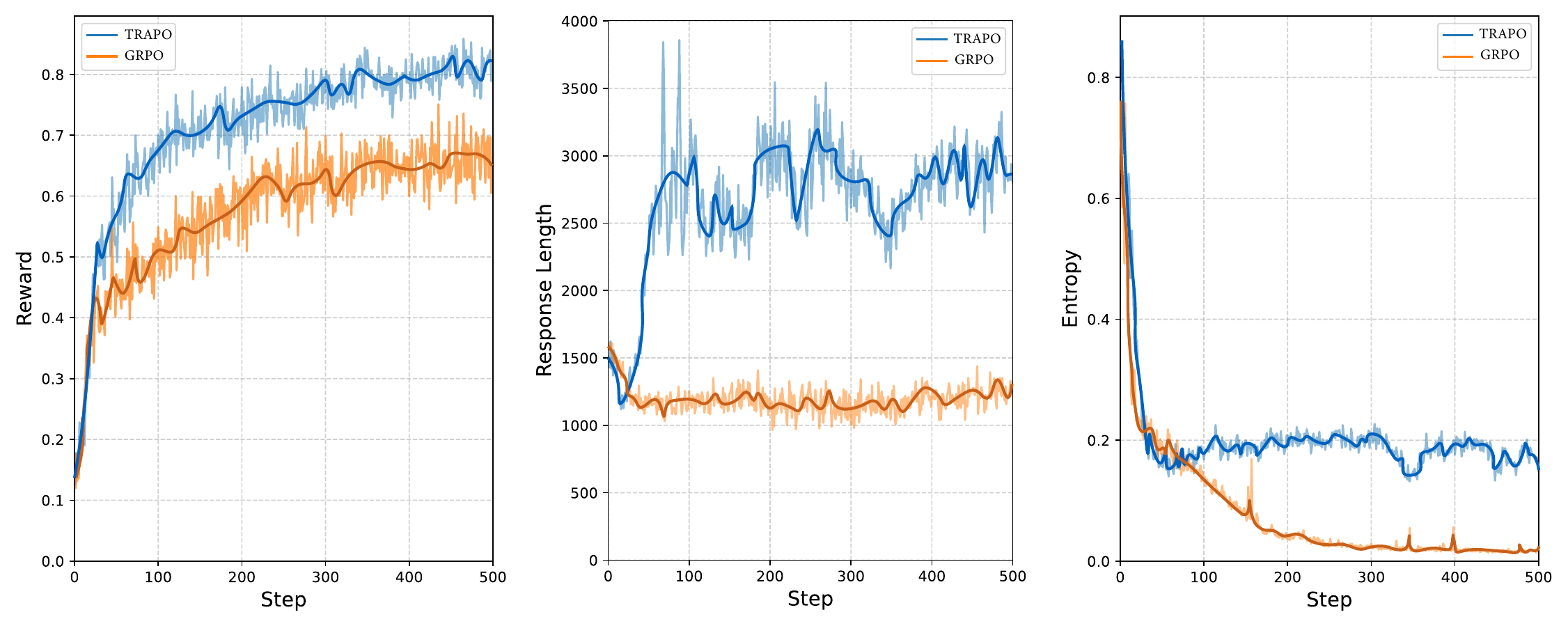}
\end{center}
\vspace{-2em}
\caption{Training dynamics of \mySolution\ compared with GRPO. From left to right: average reward, generation length, and output entropy during training. 
For fair comparison, both reward and generation length are computed by excluding trajectories guided by expert prefixes.
}
\label{fig:training-dynamic}
\end{figure}

\paragraph{Training dynamics}
Figure~\ref{fig:training-dynamic} presents a comparative analysis of the training dynamics between \mySolution\ and GRPO, revealing three key advantages of our approach: 
(1) \mySolution\ consistently achieves higher rewards throughout the training process and finally converges to a significantly higher final reward level.
(2) The generation length curve highlights that \mySolution\ rapidly increases its output length in the early stages, indicating a swift internalization of the expert's extended reasoning patterns. In contrast, GRPO struggles to produce longer solutions, consistently maintaining a short output length. (3) While both methods show an initial drop in policy entropy, their long-term behavior differs. \mySolution\ stabilizes at a relatively higher entropy level. We attribute this to its ability to maintain a dynamic balance: it simultaneously refines its own high-probability reasoning paths while remaining open to learning from externally provided, potentially low-probability expert guidance.

\begin{wrapfigure}{r}{0.35\textwidth}
    \centering
    \vspace{-2em}
    \includegraphics[width=0.35\textwidth]{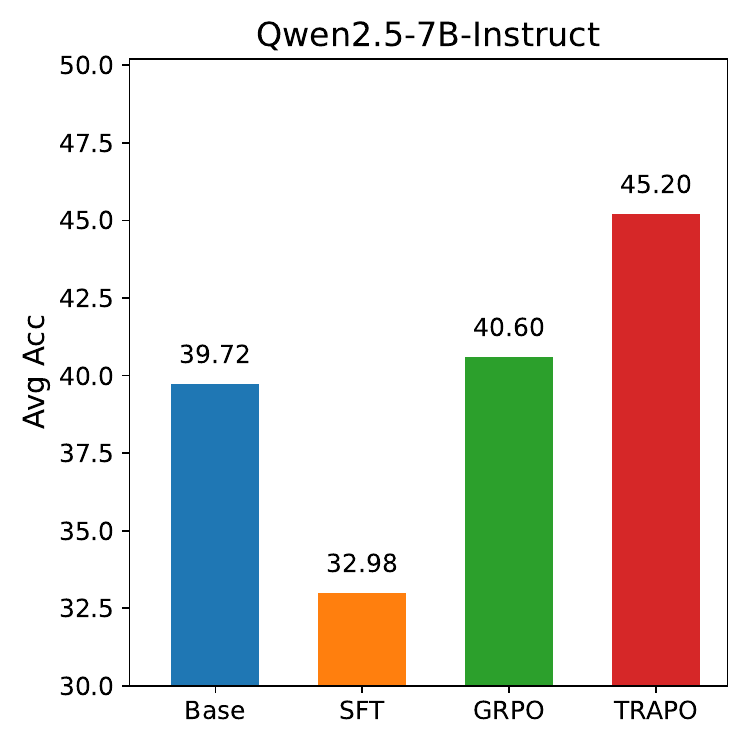}
    \vspace{-2em}
    \caption{Average accuracy across five mathematical benchmarks.}
    \label{fig:qwen2.5-7b-instruct}
    \vspace{-25pt}
\end{wrapfigure}
\paragraph{Extending to general-purpose LLMs.}
To further validate the generality of \mySolution, we conduct additional experiments on Qwen2.5-7B-Instruct, a general-purpose instruction-tuned model. As shown in Figure~\ref{fig:qwen2.5-7b-instruct}, \mySolution\ still achieves higher average accuracy than all other baselines across five mathematical reasoning benchmarks.

\subsection{Ablation Study}
\label{sec:ablation}
To assess the contributions of micro-group sampling and TrSFT, we conduct an ablation study on Qwen2.5-Math-7B (Table~\ref{tab:ablation}). Results show that micro-group sampling alone, even without explicit prefix learning, surpasses GRPO by adaptively determining prefix length to enhance reasoning and reward density. Adding TrSFT further improves performance: unlike standard SFT loss (which degrades) or the offline RL loss from LUFFY (which brings limited gains), TrSFT effectively internalizes expert prefixes. We also analyze the effect of the trust-region parameter $\alpha$ in Appendix~\ref{appendix:hyperparameter}.
\begin{table}[t]
\centering
\caption{Ablation study on \mySolution\ components.}
\label{tab:ablation}
\vspace{-10pt}
\begin{center}    
\resizebox{\textwidth}{!}{
\begin{tabular}{lcccccc}
\toprule
\textbf{Model} & \textbf{AIME2024}& \textbf{AMC} & \textbf{MATH-500} & \textbf{Minerva} & \textbf{Olympiad} & \textbf{Avg.}  \\
\midrule
GRPO (Qwen2.5-Math-7B) & 24.0 & 59.0 & 84.0 & 39.3 & 45.8 & 50.4  \\
~~~ + Micro-group sampling & 26.0 & 63.7 & 84.2 & 39.3 & 50.1 & 52.7  \\
~~~ + Micro-group sampling + \textit{SFT Loss} & 14.6 & 32.8 & 60.2 & 25.4 & 28.5 & 32.3  \\
~~~ + Micro-group sampling + \textit{LUFFY Loss} & 26.2 & 65.1 & 87.2 & 36.8 & 52.5 & 53.6  \\
~~~ + Micro-group sampling + \textit{\mySFT\ Loss} & 28.3 & 66.2 & 89.2 & 41.5 & 57.6 & 56.6
\\\bottomrule
\end{tabular}}
\end{center}
\end{table}

\subsection{Test-Time Scaling}

We evaluate pass@k, the success rate over k independent rollouts, to better estimate the upper bound of model capability~\citep{snell2024scaling}, as recent studies show that multiple generation attempts reveal reasoning potential more accurately than few rollouts~\citep{yue2025does, wang2022self}.


Figure~\ref{fig:test-time-scale} illustrates the pass@k performance on the AIME2024 benchmark, from which we derive two key insights: (1) We observe that the base model (Qwen2.5-Math-7B) surpasses the GRPO-trained model when evaluated with a sufficiently large $k$. This aligns with prior findings~\citep{yue2025does} and suggests that standard RL primarily stimulates the model to select better solutions from its existing knowledge space, but does not fundamentally expand that space with new problem-solving skills. (2) Both \mySolution\ and the SFT-based methods demonstrate strong performance scaling with larger $k$, indicating they possess a richer underlying solution space. The superior performance of \mySolution\ highlights its success in effectively internalizing the external knowledge from expert trajectories, thereby expanding the model's intrinsic capabilities.

\begin{wrapfigure}{r}{0.4\textwidth}
    \centering
    \vspace{-6.0em}
    \includegraphics[width=0.4\textwidth]{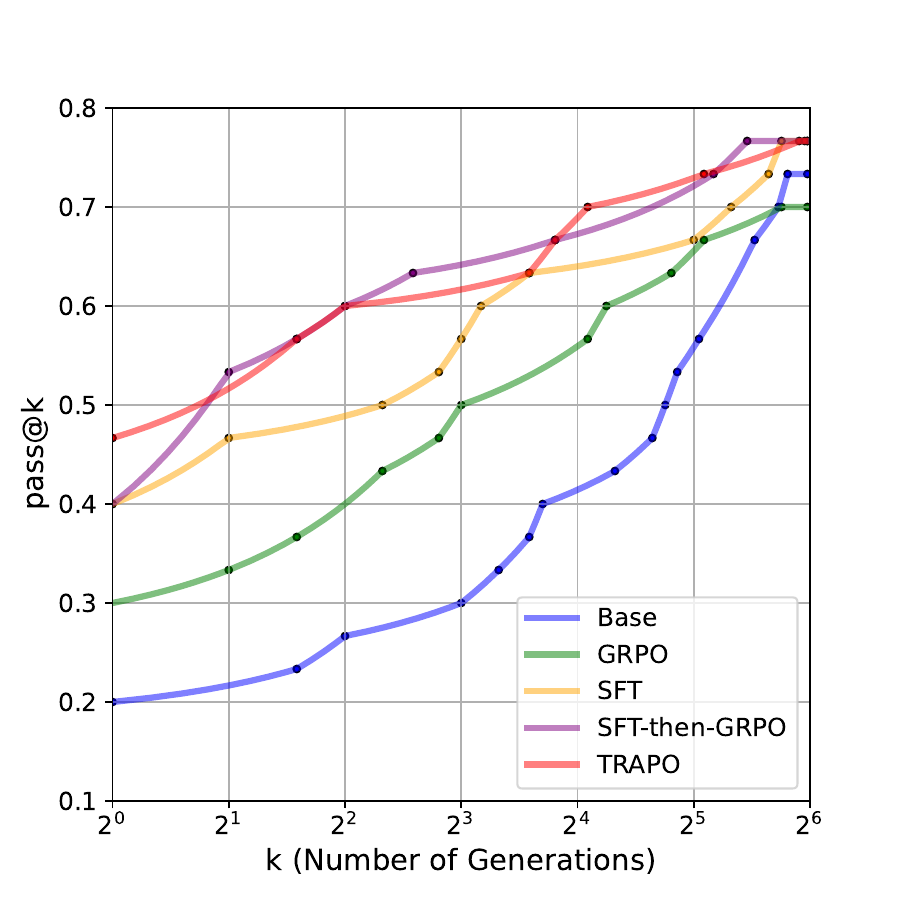}
    \vspace{-1.5em}
    \caption{Comparison of pass@k results on the AIME2024 dataset
    }
    \label{fig:test-time-scale}
\end{wrapfigure}
\section{Related Works}

\paragraph{RL for LLM reasoning}

RL has recently driven major advances in LLM reasoning, as demonstrated by milestone systems such as OpenAI-o1~\citep{jaech2024openai}, DeepSeek-R1~\citep{guo2025deepseek}, and Kimi-1.5~\citep{team2025kimi}. Follow-up studies have examined RL from three complementary perspectives. (1) Empirical analyses investigate its effect on reasoning, e.g., \citet{yue2025does} shows that RL refines existing solution trajectories rather than expanding reasoning capabilities. (2) Data-centric approaches reshape training signals, such as R3~\citep{xi2024training} with reverse curriculum learning, and ADARFT~\citep{shi2025efficient} or Logic-R1~\citep{xie2025logic} with dynamic difficulty scheduling to improve sample efficiency. (3) Optimization methods refine RL objectives, from PPO’s clipped trust region and GRPO’s group-based advantage estimation to recent variants like DAPO~\citep{yu2025dapo}, Dr.GRPO~\citep{liu2025understanding}, and VAPO~\citep{yue2025vapo}. Distinct from these directions, \mySolution\ enhances RL efficiency by adaptively incorporating expert demonstrations, which both guide exploration and act as direct supervision, simultaneously improving reward density and reasoning capability.



\paragraph{Combining SFT with RL}
While the traditional training paradigm primarily followed a serial approach of first applying SFT and then RL, recent works have shifted focus towards concurrently performing both SFT and RL. A straightforward approach is to directly combine the SFT and RL losses, adjusting their respective weights. For instance, SRFT~\citep{fu2025srft} proposes a single-stage training method that dynamically adjusts the weights of SFT and RL losses based on token entropy. AMFT~\citep{he2025amft} views the balance between SFT and RL as a learnable parameter and introduces a meta-gradient adaptive weight controller. HPT~\citep{lv2025towards}, on the other hand, dynamically decides whether to apply SFT or RL based on the model's rollout performance, with binary weights of 0 or 1. Another approach involves interleaving SFT within the RL pipeline, as seen in ReLIFT~\citep{ma2025learning}, which collects poorly performing samples during RL training and stores them in a buffer to later apply SFT. Additionally, LUFFY~\citep{yan2025learning}, inspired by offline RL, treats one expert trajectory as offline data and mixes it with the remaining seven online trajectories in a group, using an importance ratio to calibrate the distribution shift, achieving state-of-the-art performance. Similar to our work, Prefix-RFT~\citep{huang2025blending} samples prefixes as guidance and uses entropy to select expert tokens for SFT. These works collectively emphasize the significant benefits of combining these two post-training paradigms. We present the first study to theoretically investigate the challenge of combining SFT and RL objectives, and propose a simple but effective solution to resolve the distribution-blending effect and foster synergy with the RL objective.

\section{Conclusion}
In this work, we propose \mySolution, a one-stage training paradigm that unifies SFT and RL under expert prefix guidance. \mySolution\ enables both the internalization of guidance and its use in generating high-quality continuations. To stabilize training, we introduce \mySFT, which constrains policy updates within a trust region, mitigating the disruptive effect of low-probability tokens and effectively shifting Forward KL’s mode-covering behavior toward Reverse KL’s mode-seeking behavior. \mySFT\ integrates seamlessly into RL to enhance guidance utilization. We further design micro-group sampling, which adaptively adjusts guidance length based on return improvements, balancing exploration with expert supervision. Experiments on mathematical reasoning tasks show that \mySolution\ significantly outperforms standalone SFT, RL, and the conventional SFT-then-RL pipeline.

\bibliography{iclr2026_conference}
\bibliographystyle{iclr2026_conference}

\newpage

\appendix
\section{Derivations and Proofs}
\subsection{Derivation of the Equivalence between SFT and KL}
\label{appendix:sft-kl-eq}
Following~\citet{agarwal2024policy}, we remove the length normalization factor and define the discrepancy between token-level distribution of $\tea$ and $\stu$ over a given sequence $\boldsymbol{y}$:
\begin{equation}
  \D\big(\tea \| \stu^\theta\big)(\boldsymbol{y}|\boldsymbol{x}) := \sum_{n=1}^{|\boldsymbol{y}|} \D\big(\tea(\cdot|y_{<n},\boldsymbol{x}) \| \stu^\theta(\cdot|y_{<n}, \boldsymbol{x})\big).\label{eq:div}
\end{equation}
The problem then becomes proving the following equivalence:
\begin{equation}
    \min_{\theta} L_{\text{SFT}}(\theta) \Leftrightarrow \min_{\theta} \mathbb{E}_{x\sim X}\Big[\mathbb{E}_{\boldsymbol{y}\sim \tea(\cdot|\boldsymbol{x})}\big[ \D_\text{KL}\big(\tea \| \stu^{\theta}\big)(\boldsymbol{y}|\boldsymbol{x})\big]\Big]. \label{eq:sft-kl-eq}
\end{equation}
We start by substituting Eq.~\eqref{eq:div} into the objective expression on the right-hand side of Eq.~\eqref{eq:sft-kl-eq}:
\begin{align}
    \mathbb{E}_{\boldsymbol{y}\sim \tea(\cdot|\boldsymbol{x})}\big[ \D_\text{KL}\big(\tea \| \stu^{\theta}\big)(\boldsymbol{y}|\boldsymbol{x})\big] &=  \mathbb{E}_{\boldsymbol{y}\sim \tea(\cdot|\boldsymbol{x})}\big[\sum_{n=1}^{|\boldsymbol{y}|}\D_\text{KL}\big(\tea(\cdot|y_{<n},\boldsymbol{x})\|\stu^\theta(\cdot|y_{<n}, \boldsymbol{x})\big)\big] \\&=\mathbb{E}_{\boldsymbol{y}\sim \tea(\cdot|\boldsymbol{x})}\big[\sum_{n=1}^{|\boldsymbol{y}|}\sum_{y_n\in V}\tea(y_n|y_{<n},\boldsymbol{x})\log\frac{\tea(y_n|y_{<n},\boldsymbol{x})}{\stu^{\theta}(y_n|y_{<n},\boldsymbol{x})}\big].
\end{align}
We then remove the term that are independent of $\theta$:
\begin{align}
&\min_\theta \mathbb{E}_{\boldsymbol{x}\sim X}\Big[\mathbb{E}_{\boldsymbol{y}\sim \tea(\cdot|\boldsymbol{x})}\big[\sum_{n=1}^{|\boldsymbol{y}|}\sum_{y_n\in V}\tea(y_n|y_{<n},\boldsymbol{x})\log\frac{\tea(y_n|y_{<n},\boldsymbol{x})}{\stu^{\theta}(y_n|y_{<n},\boldsymbol{x})}\big]\Big] \\ \Leftrightarrow &\min_\theta \mathbb{E}_{\boldsymbol{x}\sim X}\Big[\mathbb{E}_{\boldsymbol{y}\sim \tea(\cdot|\boldsymbol{x})}\big[\sum_{n=1}^{|\boldsymbol{y}|}\sum_{y_n\in V}-\tea(y_n|y_{<n},\boldsymbol{x})\log\stu^{\theta}(y_n|y_{<n},\boldsymbol{x})\big]\Big] \\ \Leftrightarrow
&\min_\theta \mathbb{E}_{\boldsymbol{x}\sim X}\Big[\mathbb{E}_{\boldsymbol{y}\sim \tea(\cdot|\boldsymbol{x})}\big[\sum_{n=1}^{|\boldsymbol{y}|}\mathbb{E}_{y_n\sim\tea(\cdot|y_{<n},\boldsymbol{x})}[-\log\stu^{\theta}(y_n|y_{<n},\boldsymbol{x})]\big]\Big] \\ \Leftrightarrow &\min_\theta\mathbb{E}_{\boldsymbol{x}\sim X}\Big[\mathbb{E}_{\boldsymbol{y}\sim\tea(\cdot|\boldsymbol{x})}\big[\sum_{n=1}^{|\boldsymbol{y}|}[-\log\stu^\theta(y_n|y_{<n},\boldsymbol{x})]\big]\Big] \\
\Leftrightarrow &\min_\theta\mathbb{E}_{\boldsymbol{x}\sim X}\Big[\mathbb{E}_{\boldsymbol{y}\sim\tea(\cdot|\boldsymbol{x})}\big[-\log \stu^\theta(\boldsymbol{y}|\boldsymbol{x})\big]\Big]
\end{align}
The last line follows from the equivalence between the logarithm of a product and the sum of logarithms.

\subsection{Proof of Proposition \ref{prop:solution}}
\label{appendix:solution}
\Firstprop*
\begin{proof}
First, we present the equivalent form of this optimization problem, with the derivation similar to that in Eq.~\eqref{eq:sft-kl-eq}:
\begin{equation} \label{eq:TrSFT-composite}
\begin{aligned}
    \min_{\theta} L_{\text{\mySFT}}^\alpha(\theta) &\Leftrightarrow \min_{\theta} \mathbb{E}_{x\sim X}\Big[\mathbb{E}_{\boldsymbol{y}\sim \tea(\cdot|\boldsymbol{x})}\big[ \D_\text{\mySFT}^\alpha\big(\tea \| \stu^{\theta}\big)(\boldsymbol{y}|\boldsymbol{x})\big]\Big],\\ 
    \text{where} \quad \D^\alpha_{\text{\mySFT}}(\tea\|\stu^\theta) &= \sum_{c \in \mathcal{C}} \tea(c) \ell^\alpha\big(\stu^\theta(c)\big) \quad \text{and} \quad \ell^\alpha(p) = 
    \begin{cases}
        - \dfrac{p}{\alpha}+1-\log\alpha, & p < \alpha, \\[8pt]
        - \log p, & p \geq \alpha,
    \end{cases}
    \nonumber
\end{aligned}
\end{equation}

Then we apply the Karush–Kuhn–Tucker (KKT)~\citep{ghojogh2021kkt} conditions.

We introduce multiplier $\lambda \in R$ for the equality constraint and multipliers $\big\{\mu_c|c \in \mathcal{C}\big\}$ for nonnegativity constraints. The Lagrangian is
\begin{equation}
    \mathcal{L}\big(\stu(c_1),\cdots,\stu(c_N),\lambda,\mu_{c_1},\cdots,\mu_{c_N}\big) = \sum_{c}\tea(c)\ell^\alpha\big(\stu(c)\big) - \sum_{c} \mu_c \stu(c) + \lambda \big(\sum_c \stu(c) - 1\big).
\end{equation}
For each $c$, stationarity gives $\tea(c)\nabla_{\stu(c)}\ell^\alpha\big(\stu(c)\big) - \mu_c + \lambda = 0$, together with $\mu_c \ge 0$ and $\mu_c \stu^c = 0$.
The derivative of $\ell^\alpha$ is
\[
\nabla_{\stu}\ell^\alpha(\stu) =
\begin{cases}
-\dfrac{1}{\alpha}, & 0 \le \stu \le \alpha, \\[6pt]
-\dfrac{1}{\stu}, & \stu > \alpha .
\end{cases}
\]
\textbf{Case A \big($\stu(c) > \alpha$\big).}

In this case, $\nabla\ell^\alpha\big(\stu(c)\big) = -1/\stu(c)$. The stationarity condition with $\mu_c=0$ gives
\[
-\frac{\tea(c)}{\stu(c)} + \lambda = 0 
\quad\Rightarrow\quad 
\stu(c) = \frac{\tea(c)}{\lambda}.
\]
Since we assumed $\stu(c) > \alpha$, this requires $\tea(c) > \alpha\lambda$.

\medskip

\textbf{Case B \big($0 < \stu(c) \le \alpha$\big).} 

Here $\nabla\ell^\alpha(\stu(c)) = -1/\alpha$. With $\mu_c=0$, the condition becomes
\[
-\frac{\tea(c)}{\alpha} + \lambda = 0 
\quad\Rightarrow\quad 
\tea(c) = \alpha\lambda.
\]
Thus interior solutions in $(0,\alpha]$ are only possible if $\tea(c) = \alpha\lambda$.

\medskip
\noindent\textbf{Case C \big($\stu(c) = 0$\big).}

At the boundary $\stu(c)=0$, we use $\nabla\ell^\alpha(0^+)=-1/\alpha$. 
Stationarity then gives
\[
-\frac{\tea(c)}{\alpha} + \lambda - \mu_c = 0
\quad\Rightarrow\quad 
\mu_c=\lambda - \frac{\tea(c)}{\alpha}.
\]
For feasibility we require $\mu_c\ge 0$, hence this case occurs only when 
$\tea(c) \le \alpha\lambda$.

Collecting the above conditions, the optimal solution must satisfy:
\[
\stu^{\star}(c) =
\begin{cases}
\dfrac{\tea(c)}{\lambda}, & \text{if } \tea(c) > \alpha\lambda, \\[10pt]
0, & \text{if } \tea(c) < \alpha\lambda, \\[10pt]
\text{any value in } [0,a], & \text{if } \tea(c) = \alpha\lambda.
\end{cases}
\]

In realistic token distributions the set of tokens satisfying $\tea(c)=\alpha\lambda$ has measure zero and can be ignored. Equivalently, one may simply enforce $\stu(c)=0$ for all such tokens. The optimality conditions derived above remain valid. Then from $\sum_c \stu(c)=1$, we can derive $\lambda =\sum_{c \in S(\lambda)}\tea(c)$.

Observe that the normalization condition requires
\[
\sum_{c \in S(\lambda)} \frac{\tea(c)}{\lambda} = 1.
\]
As $\lambda$ increases continuously from $0$ to $1$, 
the left-hand side decreases monotonically from $+\infty$ to the sum of probablity less than $1$, 
while the right-hand side remains constant at $1$. 
Therefore, by the intermediate value theorem, such a value of $\lambda$ must exist.

\end{proof}

\section{Pseudocode for \mySolution\ }
\label{appendix:pseudocode}
Algorithm~\ref{alg:tag} summarizes the complete training procedure of {\mySolution}.

\begin{algorithm}[t]
\caption{\mySolution\ : Micro-group Sampling with Adaptive Guidance}
\label{alg:tag}
\begin{algorithmic}[1]
\Require Dataset $D_\text{data}=\{(\boldsymbol{x}, \boldsymbol{y^\star})\}$; expert policy $\tea$; target policy $\stu^\theta$; trust-region parameter $\alpha$; number of micro-groups $N$; metadata $\{(n_i, L_i, t_i)\}_{i=1}^N$ with $0=L_1 < L_2 < \cdots < L_N = 1$
\Ensure Updated student parameters $\theta$

\Function{PassRate}{$\mathcal{S}$}
    \State \textbf{return} $\frac{1}{\max(1,|\mathcal{S}|)}\sum_{(\boldsymbol{x},\boldsymbol{y})\in\mathcal{S}} \mathds{1}[\text{Correct}(\boldsymbol{y}; \boldsymbol{y^\star})]$ \Comment{compare to ground truth $\boldsymbol{y^\star}$}
\EndFunction

\For{each mini-batch $\mathcal{B}\subset D_\text{data}$}
    \For{each prompt $\boldsymbol{x} \in \mathcal{B}$}
        \State $\mathcal{S} \gets \emptyset$ \Comment{collected samples for this $x$ so far}
        \For{$i = 1$ to $N$} \Comment{serial micro-groups}
            \State $pr_i \gets \Call{PassRate}{\mathcal{S}}$ \Comment{use samples from previous $1,\ldots ,i\!-\!1$ micro-groups}
            \If{$pr_i \le t_i$} \Comment{inject expert guidance}
                \State $\boldsymbol{y} \gets \tea(\cdot \mid \boldsymbol{x})$
                \State $\boldsymbol{y^{<L_i}} \gets \text{first } \lfloor L_i \cdot |\boldsymbol{y}| \rfloor \text{ tokens of } \boldsymbol{y}$
                \For{$k=1$ to $n_i$}
                    \State $\boldsymbol{\hat{y}} \sim \stu^\theta(\cdot \mid \boldsymbol{x} , \boldsymbol{y^{<L_i}})$
                    \State $\mathcal{S} \gets \mathcal{S} \cup \{(\boldsymbol{x},\boldsymbol{y^{<L_i}} \oplus \boldsymbol{\hat{y}})\}$
                \EndFor
            \Else \Comment{no guidance}
                \For{$k=1$ to $n_i$}
                    \State $\boldsymbol{\hat{y}} \sim \stu^\theta(\cdot \mid \boldsymbol{x})$
                    \State $\mathcal{S} \gets \mathcal{S} \cup \{(\boldsymbol{x},\boldsymbol{\hat{y}})\}$
                \EndFor
            \EndIf
        \EndFor
    \EndFor
    \State Compute $\nabla_\theta L_{\text{TrSFT}}^\alpha$ on \emph{guided tokens only} (those fed to the target) via Eq.~\eqref{eq:TrSFT-gradient}
    \State Compute $\nabla_\theta L_{\text{GRPO}}$ on \emph{self-generated trajectories}
    \State $\nabla_\theta L \gets \nabla_\theta L_{\text{TrSFT}}^\alpha + \nabla_\theta L_{\text{GRPO}}$ 
    \State $\theta \gets \text{OptimizerStep}(\theta, \nabla_\theta L)$
\EndFor
\end{algorithmic}
\end{algorithm}

\section{Experimental Datails}
\subsection{Preliminary Experiment}
\label{appendix:pre-exp}
\paragraph{Data Collection}
We use DeepSeek-R1 to generate eight solutions for each problem in the MATH-500 benchmark. For each solution, we extract prefixes with varying token proportions as guidance, which are then concatenated to the original prompt. At each token proportion level, we obtain 4,000 prompts, which are subsequently fed into Qwen2.5-3B-Instruct for continuation.

\paragraph{Evaluation}
\begin{itemize}
    \item Accuracy is computed by comparing the generated answers with the ground truth using Math-Verify .
    \item We further input the generated continuations into GPT-4o-mini to count the occurrences of two distinct reasoning behaviors. The specific prompt templates used are as follows:
\end{itemize}

\begin{tcolorbox}[
    center,
    arc=0mm,
    boxrule=1pt,
    colback=blue!6!white,
    colframe=black,
    colbacktitle=black,
    title=Backtracking,
    boxed title style={boxrule=0pt,colframe=white}
]
You are given a math reasoning problem and the latter half of a reasoning trace generated by Qwen2.5-3B-Instruct.

\textbf{Problem:}  
\{problem\_text\}

\textbf{Model output:}  
\{output\_text\}

Detect BACKTRACKING: moments where the model abandons a line of attack and starts a new, distinct approach (not merely the next algebraic step of the same plan).  
Count an instance when the text signals a restart/switch such as:
\begin{itemize}
    \item "That doesn't work / leads nowhere / contradiction, so try...", "Instead, I'll...", "Another approach:", "Restart with...", "Consider a different method/case".
    \item Dropping the current construction and trying a fresh one (new substitution, inequality, identity, factorization, case split that resets the plan).
\end{itemize}

Do NOT count routine sequential steps in one coherent plan as backtracking.  
Return only the number between \texttt{<count>} and \texttt{</count>}. If none, return \texttt{<count>0</count>}.
\end{tcolorbox}

\begin{tcolorbox}[
    center,
    arc=0mm,
    boxrule=1pt,
    colback=blue!6!white,
    colframe=black,
    colbacktitle=black,
    title=Backward Chaining,
    boxed title style={boxrule=0pt,colframe=white}
]
You are given a math reasoning problem and the latter half of a reasoning trace generated by Qwen2.5-3B-Instruct.

\textbf{Problem:}  
\{problem\_text\}

\textbf{Model output:}  
\{output\_text\}

Detect BACKWARD-CHAINING: reasoning that starts from (or is framed by) the desired conclusion/target and derives necessary conditions that would make it true.  
Count an instance when the text says things like:
\begin{itemize}
    \item "We want to show P, so it suffices to show Q", "Work backwards from the target...", "If the result were true, then ... must hold", "To get X, we need Y", rearranging *from the goal form to requirements*.
    \item Proof-by-contradiction setup qualifies if it explicitly frames the goal via assuming its negation to derive impossibility.
\end{itemize}

Do NOT count ordinary forward algebra unless it is clearly framed as working from the goal backward.  
Return only the number between \texttt{<count>} and \texttt{</count>}. If none, return \texttt{<count>0</count>}.
\end{tcolorbox}

\subsection{Main Experiment}
\label{appendix:main_exp}
In this section, we provide training details not mentioned in the main text.
\paragraph{Base Model}
Our main experiments are conducted using Qwen2.5-Math-7B, a model designed for mathematical reasoning tasks. Due to the limitation of 8k tokens in the DeepSeek-R1 expert trajectories and the 4096 context length of Qwen2.5-Math-7B, we follow LUFFY's approach and increase the model's rope theta from 10000 to 40000.

\paragraph{Dataset}
Our data source is OpenR1-Math-46k-8192~\citep{yan2025learning}, which filters out incorrect trajectories and those with token lengths greater than 8192 from OpenR1-Math-200k~\citep{openr1}. Since OpenR1-Math-200k contains at least two trajectories generated by Deepseek-R1 for each problem, we also collect the corresponding second trajectory for each entry in OpenR1-Math-46k-8192, resulting in approximately 46k prompts and 92k trajectories. Furthermore, since we did not require the model to split the output into think and answer parts in the system prompt, we retain only the tokens between the \texttt{<think>} and \texttt{</think>} tags from the original Deepseek-R1 output as candidate trajectories. During micro-group sampling, we first calculate the corresponding token position based on the desired ratio, then locate the first reasoning step delimiter (typically a double newline in the Deepseek-R1 style) after this position to ensure the atomicity and completeness of the reasoning steps.

\paragraph{SFT}
For all SFT methods, we train on the 92k \texttt{<}prompt, trajectory\texttt{>} pairs for 2 full epochs as described above. The hyperparameters are set as follows: sequence length of 16,384, learning rate of 5e-5 with a warmup ratio of 0.1, and a batch size of 32. The remaining experimental settings are consistent with those used in OpenR1-Qwen-7B~\citep{openr1}.
All SFT training is conducted using the OpenRLHF~\citep{hu2024openrlhf} framework.

\paragraph{RL}
All RL methods use a variant of GRPO, Dr.GRPO~\citep{liu2025understanding}, which removes length normalization and the standard deviation regularization in advantage calculation. The hyperparameters are as follows: batch size of 128, 8 rollouts per group, max prompt length of 1024, max response length of 8192, and a constant learning rate of 1e-6. We use vLLM~\citep{kwon2023efficient} for rollouts during training with a sampling temperature set to 1.0. RL training is conducted using the verl~\citep{sheng2025hybridflow} framework.

\paragraph{\mySolution\ }
Since the dataset contains only two expert trajectories per prompt, and we provide expert guidance for up to 4 out of 8 rollouts in each group, we randomly select one expert trajectory as a potential guidance source for each micro-group to ensure diversity in the expert data. Except for setting the trust region parameter \(\alpha\) to \rev{0.1}, all other training parameters are consistent with those used in RL and are implemented within the verl framework.

\paragraph{Evaluation} For evaluation, we use
a lower temperature of 0.6 and a maximum generation length of 8,192 tokens. We employ Math-verify and OAT-Grader~\citep{liu2025oat} frameworks for evaluation.

\paragraph{Prompt Template}

Following SimpleRL~\citep{zeng2025simplerl}, we use the simplest prompt template across all methods and models to ensure that the improvement in reasoning capability comes solely from the training. The template is as follows:

\begin{tcolorbox}[
    center,
    arc=0mm,
    boxrule=1pt,
    colback=blue!6!white,
    colframe=black,
    colbacktitle=black,
    title=Prompt Template,
    boxed title style={boxrule=0pt,colframe=white}
]
\texttt{<|}im\_start\texttt{|>}system \\
You are a helpful assistant. \\
\texttt{<|}im\_end\texttt{|>} \\ \\
\texttt{<|}im\_start\texttt{|>}user \\
\{problem\}

Please reason step by step, and put your final answer within \textbackslash boxed\{\}. \\
\texttt{<|}im\_end\texttt{|>}
\end{tcolorbox}

\section{Hyperparameter Study}
\label{appendix:hyperparameter}

\subsection{Ablation of Trust-Region Parameter}
\begin{figure}[t!]
\begin{center}
\includegraphics[width=\linewidth]{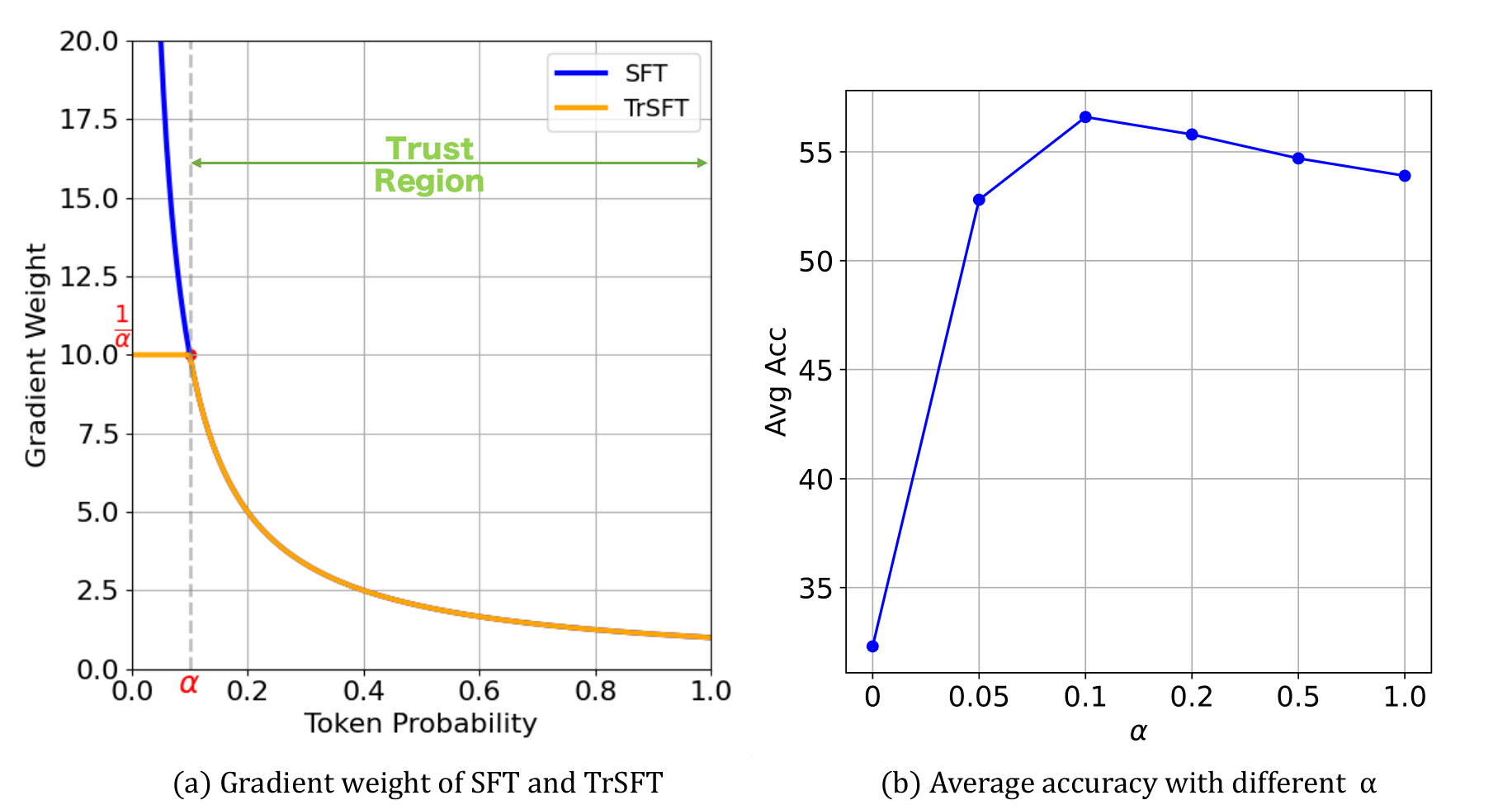}
\end{center}
\vspace{-1em}
\caption{(a) The plot showing the change in gradient weight with respect to token probability in SFT and TrSFT. (b) The average accuracy on five mathematical reasoning benchmarks for Qwen2.5-Math-7B trained with \mySolution\ for different values of \(\alpha\).}
\label{fig:hyperparameter}
\end{figure}

We apply equations \ref{eq:SFT-gradient} and \ref{eq:TrSFT-gradient} to a single token, which leads to the following form:
\begin{equation}
    \nabla_{\theta} L = \frac{1}{\stu^{\theta}(y_n|\boldsymbol{x},\boldsymbol{y_{<n}}) \texttt{ or } \max\big(\stu^{\theta}(y_n|\boldsymbol{x},\boldsymbol{y_{<n}}), \alpha\big)}\nabla_{\theta}\stu^{\theta}(y_n|\boldsymbol{x},\boldsymbol{y_{<n}}). \label{eq:compare-gradient}
\end{equation}
The weight (the fractional factor) in front of the gradient term varies with \(\stu^{\theta}(y_n|\boldsymbol{x}, \boldsymbol{y_{<n}})\) as shown in Figure~\ref{fig:hyperparameter}(a). When \(\alpha = 0\), TrSFT is equivalent to SFT, meaning that all tokens lie within the trust region. As \(\alpha\) increases, the gradient update weight of tokens outside the trust region is reduced to a fixed value of \(1/\alpha\), significantly mitigating the disruptive impact of large gradient updates from these tokens on the model parameters. However, when \(\alpha\) becomes too large, all tokens are assigned the same gradient weight, completely losing the characteristic of SFT where tokens with lower probability require more learning. Therefore, choosing an appropriate value of \(\alpha\) is crucial to balance the resistance against low-probability tokens and the need to effectively learn from them. Figure~\ref{fig:hyperparameter}(b) shows the training performance of \mySolution\  under different values of \(\alpha\) (0, 0.05, 0.1, 0.2, 0.5, 1). We observe that the best performance is achieved when \(\alpha = 0.1\), with both higher and lower values of \(\alpha\) failing to reach the optimal performance.

\subsection{Ablation of Micro-Group Hyperparameters}

\rev{In micro-group sampling, we adopt an adaptive strategy for determining expert prefix length. Unlike fixed-prefix approaches, this dynamic mechanism selects an appropriate amount of expert guidance per prompt, providing stronger assistance when unguided rollouts perform poorly.}

\rev{Each micro-group \(g_i\) contains several tunable hyperparameters:  
(1) the prefix-length proportion \(L_i\),  
(2) the reward threshold \(t_i\), and  
(3) the number and sizes of micro-groups.  
All variants were trained on Qwen2.5-Math-7B for 100 steps, with average accuracy reported across six mathematical reasoning benchmarks.}

\vspace{0.5em}
\noindent\textbf{\rev{1. Prefix-length proportion.}}

\begin{table}[h]
\centering
\caption{Ablation of the prefix-length proportion.}
\label{tab:ablation-prefix-length}
\vspace{6pt}
\begin{tabular}{c|cccc|c}
\toprule
Variant & $L_1$ & $L_2$ & $L_3$ & $L_4$ & Avg. Perf. \\
\midrule
A & 0 & 0.2 & 0.5 & 1.0 & 39.4 \\
B & 0 & 0.2 & 0.5 & 0.7 & 37.6 \\
C & 0.1 & 0.2 & 0.5 & 1.0 & 34.1 \\
D & 0 & 0.1 & 0.8 & 1.0 & 39.2 \\
E & 0 & 0.3 & 0.4 & 1.0 & 39.5 \\
\bottomrule
\end{tabular}
\end{table}

\rev{As shown in Table~\ref{tab:ablation-prefix-length}, compared with Variant A, Variants B and C confirm the importance of our two design principles:  
(1) the first micro-group must receive no expert prefix to preserve unguided exploration, and  
(2) the final micro-group should expose the full expert trajectory.  
When modifying only the intermediate prefix proportions (Variants D and E), performance remains largely unchanged, suggesting that TRAPO is robust to these internal ratios.}

\vspace{0.5em}
\noindent\textbf{\rev{2. Reward threshold.}}

\begin{table}[h]
\centering
\caption{Ablation of the reward threshold.}
\label{tab:ablation-reward-threshold}
\vspace{6pt}
\begin{tabular}{c|cccc|c}
\toprule
Variant & $t_1$ & $t_2$ & $t_3$ & $t_4$ & Avg. Perf. \\
\midrule
A & -1 & 0.5 & 0.7 & 0.9 & 39.4 \\
B & -1 & 0.5 & 0.5 & 0.5 & 38.1 \\
C & -1 & 0.9 & 0.7 & 0.5 & 37.4 \\
D & -1 & 0.55 & 0.75 & 0.95 & 38.9 \\
E & -1 & 0.45 & 0.65 & 0.85 & 39.3 \\
\bottomrule
\end{tabular}
\end{table}

\rev{We maintain a monotonically increasing threshold schedule to match the naturally increasing expected reward of longer prefixes.  
As shown in Table~\ref{tab:ablation-reward-threshold}, constant thresholds (B) or decreasing schedules (C) reduce performance.  
As long as thresholds increase in a reasonable range (D and E), their exact values have minimal effect.}

\vspace{0.5em}
\noindent\textbf{\rev{3. Number and sizes of micro-groups.}}
\vspace{-10pt}
\begin{table}[h]
\centering
\caption{Ablation of the number and sizes of micro-groups.}
\vspace{6pt}
\label{tab:ablation-micro-groups}
\begin{tabular}{c|c|c}
\toprule
Variant & $(n_1, n_2, \dots)$ & Avg. Perf. \\
\midrule
A & (4, 2, 1, 1) & 39.4 \\
B & (2, 4, 1, 1) & 37.2 \\
C & (4, 4) & 38.0 \\
D & (4, 3, 1) & 39.1 \\
\bottomrule
\end{tabular}
\end{table}

\rev{For the organization of micro-groups, two structural principles are crucial:  
(1) the first unguided micro-group must be sufficiently large to support exploration, and  
(2) at least one intermediate group should provide partial guidance to bridge unguided and fully guided rollouts.  
As shown in Table~\ref{tab:ablation-micro-groups}, shrinking the unguided group (B) or removing intermediate groups (C) harms performance.  
Configurations satisfying both principles (A and D) achieve similar accuracy.}

\section{Additional Cost and Efficiency Analysis}
\label{appendix:cost}

\rev{
This section provides further details on the computational efficiency of TRAPO, supplementing the discussion in the main paper. Due to the longer generation lengths and the sequential execution required by micro-group sampling, TRAPO incurs a higher per-step computational cost compared to standard GRPO. However, under matched wall-clock training-time budgets, TRAPO consistently achieves stronger reasoning performance, even though it executes slightly fewer training steps than the baselines. This suggests that TRAPO benefits from more effective training-time scaling.
}

\paragraph{\rev{Overall GPU-hour usage.}}
\rev{
Table~\ref{tab:gpu-hours} reports the total GPU hours consumed by each method. For the SFT-then-RL pipeline, this measurement includes both the SFT stage and the subsequent RL stage. We also report the average number of expert-prefix tokens (\emph{Expert Tokens}) and target-model-generated tokens (\emph{Target Tokens}) per RL trajectory.
}

\begin{table}[t]
\centering
\caption{GPU hours and token statistics across training methods.}
\vspace{6pt}
\label{tab:gpu-hours}
\begin{tabular}{lccc}
\toprule
Model & GPU hours & Expert Tokens & Target Tokens \\
\midrule
GRPO          & $68 \times 8$        & 0   & 1189 \\
SFT-then-RL   & $(42 + 105) \times 8$ & 0   & 4760 \\
LUFFY         & $81 \times 8$        & 519 & 2439 \\
TRAPO         & $89 \times 8$        & 501 & 2545 \\
\bottomrule
\end{tabular}
\end{table}

\paragraph{\rev{Training dynamics under matched wall-clock budgets.}}
\rev{
To assess training-time efficiency, we measure the number of steps completed by each method under equal GPU-hour budgets, together with the mean reward of the collected rollout batches. As reward is strongly correlated with downstream reasoning performance, it serves as a useful indicator of learning progress. For SFT-then-RL, the GPU-hour measurement below begins at the start of the RL stage.
}

\begin{table}[t]
\centering
\caption{Reward progression under matched GPU-hour budgets.}
\vspace{6pt}
\label{tab:reward-progress}
\begin{tabular}{lcccc}
\toprule
GPU hours & GRPO & SFT-then-RL & LUFFY & TRAPO \\
\midrule
$10 \times 8$ & 0.521 & 0.650 & 0.472 & 0.601 \\
$20 \times 8$ & 0.579 & 0.712 & 0.594 & 0.684 \\
$30 \times 8$ & 0.626 & 0.726 & 0.665 & 0.744 \\
$40 \times 8$ & 0.621 & 0.710 & 0.697 & 0.722 \\
$50 \times 8$ & 0.652 & 0.732 & 0.675 & 0.768 \\
\bottomrule
\end{tabular}
\end{table}

\paragraph{\rev{Summary.}}
\rev{
Across all matched training-time conditions, TRAPO consistently achieves higher reward, which is an indicator of stronger reasoning ability than GRPO and LUFFY, even though it performs fewer training steps. This highlights TRAPO’s advantageous sample efficiency and the effectiveness of combining SFT and RL through a trust-region formulation. Moreover, compared to the SFT-then-RL pipeline, TRAPO attains better performance while requiring less total GPU hours overall.
}

\section{Prefix Usage Dynamics during Training}
\label{appendix:prefix-usage}

\begin{figure}[t]
\begin{center}
\includegraphics[width=0.85\linewidth]{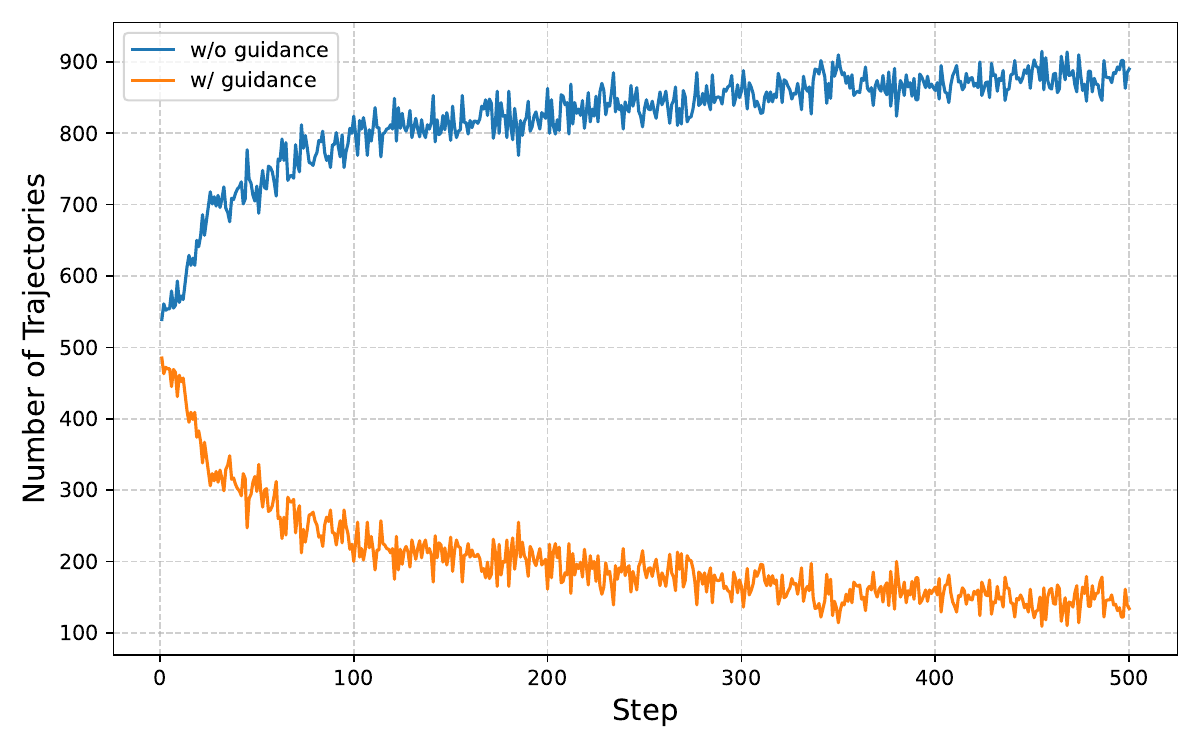}
\end{center}
\vspace{-0.8em}
\caption{Dynamics of prefix usage across training steps. We plot the number of trajectories that receive expert-prefix guidance versus those that remain unguided.}
\label{fig:prefix-num}
\end{figure}

\rev{To better understand how the adaptive prefix-selection mechanism behaves during training, we track the number of trajectories receiving expert prefix guidance at each training step. The results are shown in Figure~\ref{fig:prefix-num}. Two key phenomena emerge:}

\begin{itemize}
    \item \rev{\textbf{Unguided trajectories consistently dominate across the entire training process.}  
    This ensures that the target model maintains strong self-exploration capability and remains aligned with the evaluation setting, which uses no expert prefixes at test time.}

    \item \rev{\textbf{The number of expert-guided trajectories decreases steadily as training progresses.}  
    Early in training, the model frequently falls below the reward thresholds and therefore benefits from partial or full expert-prefix guidance. As performance improves, fewer trajectories trigger guidance, reflecting that the model becomes increasingly capable of solving problems without external intervention.}
\end{itemize}

\rev{This pattern confirms that TRAPO’s adaptive mechanism realizes the intended behavior: it provides expert prefixes only when helpful, and its reliance on expert data naturally diminishes as the target model improves. This demonstrates that TRAPO leverages \emph{smarter} guidance rather than \emph{more} guidance, and avoids over-conditioning on expert trajectories.}

\section{Case Study}
To intuitively demonstrate the differences in style and problem-solving ability of models trained with different methods, we select a polynomial factorization problem from the MATH-500 benchmark as a case study. As shown in Figure~\ref{fig:case_study}, the model trained with GRPO exhibits a consistent forward-reasoning style in problem solving, also retaining the base model’s tendency to employ code verification at the final step. However, due to limited self-reflection and self-correction abilities, it is prone to simple computational and logical mistakes. In contrast, the model trained with SFT demonstrates a rigid imitation of DeepSeek-R1, often engaging in excessive reasoning that exceeds the maximum sequence length budget. The model trained with \mySolution\ not only incorporates the verification and reflective reasoning patterns observed in expert trajectories, but also preserves the base model’s ability to provide concise and clear planning before each reasoning step (e.g., outlining “Step 1 / Step 2 / Step 3 …”). This demonstrates that TRAPO effectively balances the use of expert data with the exploration of its own solution space, combining the strengths of both to maximize training efficiency and reasoning performance.

\begin{figure}[t!]
\begin{center}
\includegraphics[width=\linewidth]{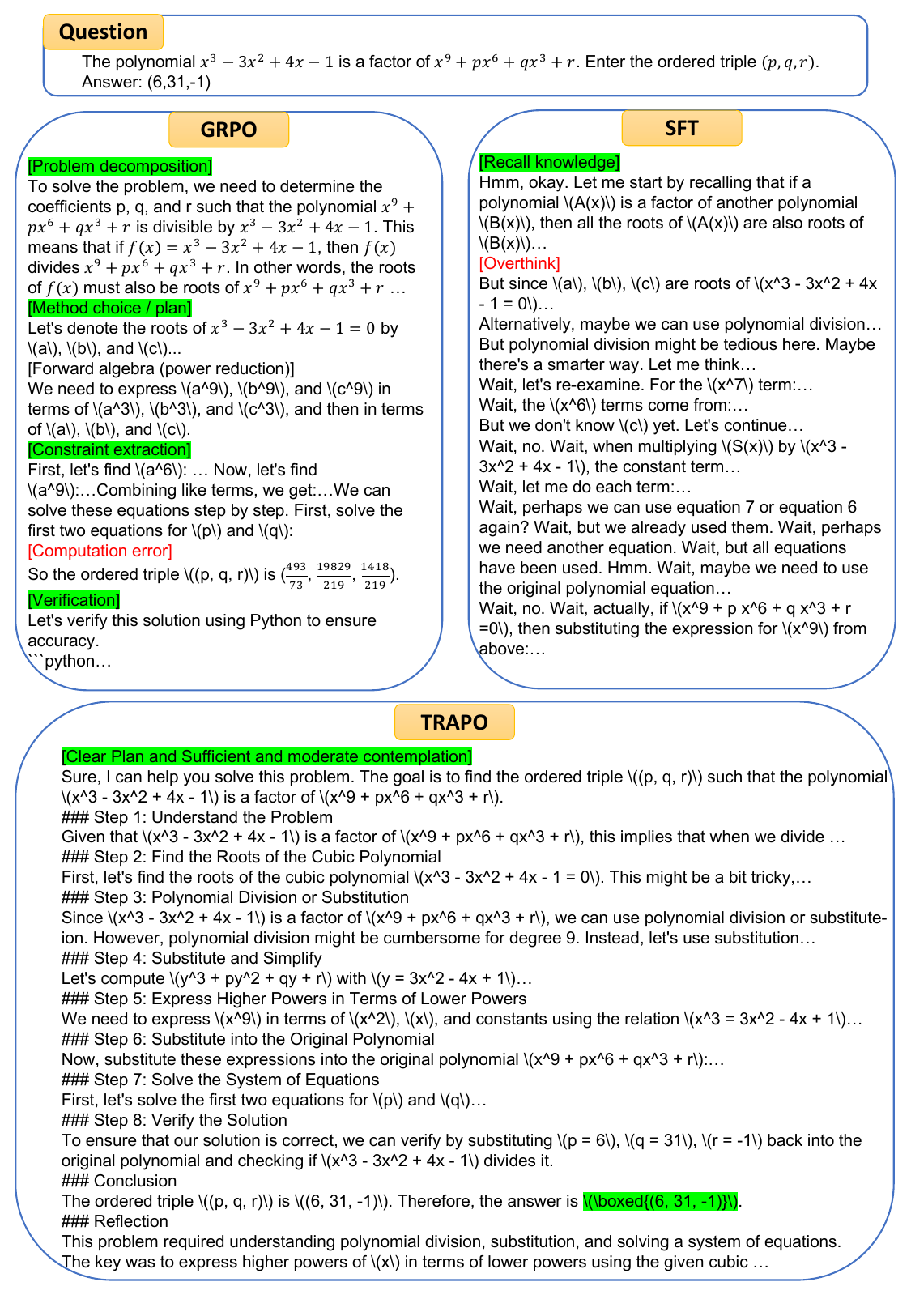}
\end{center}
\vspace{-1em}
\caption{A case study on the MATH-500 benchmark: polynomial factorization problem.}
\label{fig:case_study}
\end{figure}

\end{document}